\documentclass[orivec]{llncs}

\usepackage[english]{my-shortcuts}
\usepackage{srcltx,algorithm,algorithmic}
\usepackage{a4wide}

\usepackage{enumitem}

\usepackage{tikz}
\begin{document}

\title
{Convex recovery of tensors using nuclear norm penalization}
\author{St\'ephane Chr\'etien {\tiny and} Tianwen Wei}

\institute{Laboratoire de Math\'ematiques de Besancon\\ 
Universit\'e de Franche-Comt\'e\\
16 route de Gray,\\
25000 Besancon, France} 

\maketitle

\begin{abstract}
The subdifferential of convex functions of the singular spectrum of real matrices has been widely studied in matrix analysis, optimization and automatic control theory.  
Convex analysis and optimization over spaces of tensors is now gaining much interest due to its potential applications to signal processing, statistics and engineering. 
The goal of this paper is to
present an applications to the problem 
of low rank tensor recovery based on linear random measurement by extending the results of Tropp \cite{Tropp:ArXiv14} to the tensors setting. 
\end{abstract}


\section{Introduction}
\subsection{Background}
Tensors have been recently a subject of great interest in the applied mathematics community.  We refer to  \cite{KOLDA,Landsberg:Tensors12} for a modern reference on this subject. 
%
Many applications of tensors are based on solving tensor related optimization problems, such as minimizing certain norms under linear constraints. Such problems
have been recently successfully addressed in the 2D setting, i.e. for matrices, by the statistics, signal processing, inverse problems and automatic control communities 
in particular. Two of the reasons for this rapid growth of interest in the application of matrix norms to penalized estimation problems is that some norms 
promote spectral sparsity and that much work had been done in the fields of matrix analysis and convex analysis to analyze the subdifferential of such norms; 
see for example \cite{Watson:LAA92} and \cite{Lewis:JCA95}. 
Our goal in the present paper is to extend previous results on matrix norms to the tensor setting. 
In particular, we propose a general study of the subdifferential of certain convex functions of the spectrum of real  tensors and apply our results to the 
computation of the subdifferential of useful and natural matrix norms.  We also present an application of our formulas to the problem of low rank tensor recovery 
using sparsity promoting norm minimization under random linear constraints, a natural extension of previous works by Tropp \cite{Tropp:ArXiv14}.

\subsection{Notations}
For any convex function $f$ : $\mathbb R^{n}\mapsto \mathbb R\cup \{+\infty\}$, the conjugate function $f^*$ associated to $f$ is defined by 
\bean
f^*(g) \defeq \sup_{x \in \mathbb R^n}\quad \langle g, x  \rangle - f(x).
\eean
The subdifferential of $f$ at $x \in \mathbb R^n$ is defined by 
\bean 
\partial f & \defeq & \left\{ g \in \mathbb R^n \mid \forall y, \in \mathbb R^n \hspace{.3cm} 
f(y)\ge f(x)+\langle g,y-x\rangle  \right\}. 
\eean 
Moreover, it is well known (see e.g. \cite{HULL:Springer96}) that 
$g\in\partial f(x)$ if and only if
\bean
f(x) + f^*(g) & = &  \langle g, x\rangle. 
\eean

In the present paper, a tensor represented by a multi-dimensional array in $\mathbb R^{d_1\times \cdots\times d_D}$. 
Let $D$ and $n_1,\ldots,n_D$ be positive integers. Let $\Xc \in \mathbb R^{n_1\times \cdots \times n_D}$ denote a 
$D$-dimensional tensor. If $n_1=\cdots=n_D$, then we say that $\Xc$ is cubic. The set of $D$-mode cubic tensors 
will be denoted by $\Rb^{n\times \cdots \times n}$, where $D$ will stay implicit. 
For any index set $C \subset \{1,\ldots,n_1\} \times \cdots \times \{1,\ldots,n_D\}$, 
$\Xc_{C}$ will denote the subarray $(\Xc_{i_1,\ldots,i_D})_{(i_1,\ldots,i_D) \in C}$.

\section{Basics on tensors}
\subsection{Tensor norms}

\subsubsection{The spectrum of a tensor}

Let us define the spectrum as the mapping which to any tensor $\Xc\in \Rb^{n\times \cdots \times n}$ 
associates the vector $\sigma(\Xc)$ given by  
\bean 
\sigma(\Xc) & \defeq & \frac1{\sqrt{D}} \: (\sigma^{(1)}(\Xc),\ldots,\sigma^{(D)}(\Xc)),
\eean 
where $\sigma^{(d)}(\Xc)$ denotes the vector consisting of the singular values of the mode-$d$ matricization
of $\Xc$.

\subsubsection{Norms of tensors}
Let $\Xc=(\Xc_{ijk})$ and $\Yc=(\Yc_{ijk})$ be tensors in $\Rb^{n_1\times \cdots \times n_D}$. We can define 
several tensor norms on $\Rb^{n_1\times \cdots \times n_D}$. The first one is a natural extension of the 
Frobenius norm or Hilbert-Schmidt norm from matrices to tensors. We start by defining the following scalar product 
on $\Rb^{n_1\times \cdots \times n_D}$: 
\bean
\langle \Xc, \Yc\rangle & \defeq & \sum_{i_1=1}^{n_1}\cdots  \sum_{i_D=1}^{n_D} \Xc_{i_1,\ldots,i_D} \Yc_{i_1,\ldots,i_D}.   
\eean 
Using this scalar product, we can define the following norm, which we call the Frobenius norm
\bean 
\|\Xc\|_F & \defeq & \sqrt{\langle \Xc,\Xc\rangle}.
\eean
One may also define an ``operator norm'' in the same manner as for matrices as follows
\bean 
\|\Xc\|_{\phantom{S}} &\defeq & 
\max_{{ u^{(d)} \in\Rb^{n_d}, \atop \|u^{(d)}\|_2=1,d=1,\ldots,D}}
\langle\Xc, u^{(1)}\otimes\cdots \otimes u^{(D)}\rangle
\eean 
We also define 
%
\bean
\|\Xc\|_{*} &  \defeq & \frac{1}{D} \sum_{d=1}^D \|\sigma^{(d)}\|_1. 
\eean

\subsection{Orthogonally decomposable tensors}
\label{odec}
The Orthogonally decomposable (ODEC) tensors are defined as follows
\begin{defi}
Let $\Xc$ be a tensor in $\Rb^{n_1\times \cdots \times n_D}$. If
\bea\label{defi1}\label{odec1}
\Xc & = & \sum_{i=1}^{r}\alpha_i\cdot u_i^{(1)}\otimes\cdots \otimes u_i^{(D)},
\eea
where $r\leq n_1 \wedge \cdots \wedge n_D$, $\alpha_1\geq \cdots \geq \alpha_r > 0$  and
$\{u_1^{(d)}, \ldots, u_r^{(d)}\}$  is a family of orthonormal vectors for $d=1,\ldots,D$, then we say
(\ref{defi1}) is an orthogonal decomposition of $\Xc$.
\end{defi}
Denote $\alpha=(\alpha_1,\ldots, \alpha_r, 0,\ldots, 0)$ in $\mathbb R^{n_1 \wedge \cdots \wedge n_D}$.
For each $d\in\{1,\ldots, D\}$, we may complete $\{u_1^{(d)}, \ldots, u_r^{(d)}\}$ with
$\{u_{r+1}^{(d)},\ldots, w_{n_d}^{(d)}\}$ so that matrix 
$U^{(d)}=(u_1^{(d)},\ldots, u_{n_d}^{(d)})\in\Rb^{n_d\times n_d}$ is orthogonal. 
Using $U^{(1)},\ldots, U^{(D)}$,
we may write (\ref{defi1}) as
\bea
\Xc \label{odec2}
& = & \Dc(\alpha)\times_1 U^{(1)} \times_2 U^{(2)} \cdots\times_D U^{(D)}.
\eea
where $\Dc=\diag(\alpha)$ is a diagonal tensor with the $i$th diagonal being $\alpha_i$ for $i=1,\ldots, r$ and the other diagonal entries being zero.
Note that representation (\ref{odec2}) is generally not unique unless $n_1=\cdots=n_D$ and $\alpha_1,\ldots,\alpha_r$ are all distinct.

It is easy to calculate the norms of ODEC tensors. 
\begin{prop}\label{spectrum1}
Let $\Xc$ be an orthogonally decomposable tensor and let
\bean
\Xc&=&\sum_{i=1}^{r}\alpha_i\cdot u_i^{(1)}\otimes\cdots \otimes u_i^{(D)},
\eean
be an orthogonal decomposition of $\Xc$. Then
\bean
\|\Xc\|  & = \alpha_1  \quad \mathrm{and} \quad
\|\Xc\|_* = & \sum_{i=1}^r \alpha_i.
\eean 
\end{prop}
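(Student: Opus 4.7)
The plan is to establish the two identities separately. For the operator norm, the lower bound $\|\Xc\|\ge\alpha_1$ is immediate: plugging $u^{(d)}=u_1^{(d)}$ into the variational definition of $\|\Xc\|$ and using orthonormality of $\{u_i^{(d)}\}_{i=1}^r$ within each mode cancels all cross terms and leaves exactly $\alpha_1$.

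For the matching upper bound, I would fix arbitrary unit vectors $u^{(d)}$ and expand
\bean
\langle \Xc,u^{(1)}\otimes\cdots\otimes u^{(D)}\rangle = \sum_{i=1}^r\alpha_i\prod_{d=1}^D t_i^{(d)},\qquad t_i^{(d)}\defeq \langle u_i^{(d)},u^{(d)}\rangle.
\eean
Bessel's inequality applied to the orthonormal family $\{u_i^{(d)}\}_{i=1}^r$ yields $\sum_i(t_i^{(d)})^2\le 1$, and Cauchy--Schwarz gives $|t_i^{(d)}|\le 1$. Bounding $\alpha_i\le\alpha_1$ and applying the generalised H\"older inequality with $D$ conjugate exponents all equal to $D$ produces
\bean
\sum_{i=1}^r\prod_{d=1}^D|t_i^{(d)}|\le\prod_{d=1}^D\Big(\sum_{i=1}^r|t_i^{(d)}|^D\Big)^{1/D}\le 1,
\eean
where the last inequality uses $|t_i^{(d)}|^D\le(t_i^{(d)})^2$ (valid when $D\ge 2$) together with the Bessel bound. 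This yields $\|\Xc\|\le\alpha_1$.

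For the nuclear-type norm, the strategy is to compute each $\sigma^{(d)}(\Xc)$ explicitly from the Tucker form~(\ref{odec2}). Unfolding in mode $d$ one obtains $\Xc_{(d)} = U^{(d)}\,\Dc(\alpha)_{(d)}\,K_d^t$, where $K_d$ is the Kronecker product of the remaining $U^{(d')}$; since both $U^{(d)}$ and $K_d$ are orthogonal, the singular spectrum of $\Xc_{(d)}$ coincides with that of $\Dc(\alpha)_{(d)}$. The matricization $\Dc(\alpha)_{(d)}$ has exactly one nonzero entry $\alpha_i$ in row $i$, and these nonzero entries lie in pairwise distinct columns (the flattened multi-index $(i,\ldots,i)$ with slot $d$ omitted takes $r$ distinct values), so up to row/column permutations the matrix is diagonal with diagonal $(\alpha_1,\ldots,\alpha_r)$. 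Therefore $\|\sigma^{(d)}(\Xc)\|_1=\sum_{i=1}^r\alpha_i$ for every $d$, and averaging over $d$ gives $\|\Xc\|_*=\sum_{i=1}^r\alpha_i$.

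The main obstacle is the upper bound on the operator norm: the presence of a ``top'' summand $\alpha_1$ does not a priori preclude cooperative interference from the other rank-one components, and ruling this out requires combining orthonormality within each mode (Bessel) with the multilinear H\"older inequality. The nuclear-type identity, by contrast, reduces to bookkeeping about the Tucker representation and the invariance of singular values under orthogonal transformations on either side.
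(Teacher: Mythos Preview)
The paper does not actually prove Proposition~\ref{spectrum1}; it is stated immediately after the remark ``It is easy to calculate the norms of ODEC tensors'' and no argument is supplied. Your proposal is therefore more detailed than anything appearing in the paper, and both halves are correct.

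For the operator norm, your lower bound and the Bessel/H\"older upper bound are sound. One can in fact shorten the upper bound slightly by keeping only two of the $D$ factors: since $|t_i^{(d)}|\le 1$ for every $d$, one has $\prod_{d}|t_i^{(d)}|\le |t_i^{(1)}|\,|t_i^{(2)}|$, and a single application of Cauchy--Schwarz together with the Bessel bounds in modes~$1$ and~$2$ already gives $\sum_i\prod_d|t_i^{(d)}|\le 1$; the generalised H\"older step is not strictly needed. For $\|\cdot\|_*$, your Tucker-unfolding computation of each mode-$d$ singular spectrum is exactly the intended bookkeeping and matches the paper's definition $\|\Xc\|_*=\frac1D\sum_d\|\sigma^{(d)}(\Xc)\|_1$. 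In short, there is no discrepancy to flag: the paper omits the proof, and yours fills the gap correctly.
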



\section{Further results on the spectrum}

In this section, we will present some further results on the spectrum such as the question of characterizing the 
image of the spectrum and the subdifferential of a function of the spectrum. 

\subsection{A technical prerequisite: Von Neumann's inequality for tensors}
Von Neumann's inequality says that for any two matrices$ X$ and $Y$ in $\mathbb R^{n_1\times n_2}$, we have 
\bean 
\langle X,Y\rangle & \le & \langle \sigma(X),\sigma(Y)\rangle, 
\eean 
with equality when the singular vectors of $X$ and $Y$ are equal, up to permutations when the singular values have multiplicity greater than one. This result has proved useful for the study of the subdifferential of unitarily invariant convex functions of the spectrum in the matrix case in \cite{Lewis:JCA95}.
In order to study the subdifferential of the norms of certain type of tensors, we will need a generalization this result to higher orders. 
This  was worked out in \cite{ChretienWei:ArXiv15}. Let us recall the containt of the main result of \cite{ChretienWei:ArXiv15}. 

\begin{defi}
We say that a tensor $\Sc$ is blockwise decomposable if there exists an integer $B$ and if, for all $d=1,\ldots,D$, there exists a partition $I_1^{(d)} \cup \ldots \cup I_B^{(d)}$ into 
disjoint index subsets of $\{1,\ldots,n_d\}$, such that $\Xc_{i_1,\ldots,i_D}=0$ if for all $b=1,\ldots,B$, $(i_1,\ldots,i_D) \not \in I_b^{(1)}\times \ldots \times I_b^{(D)}$.
\end{defi}
An illustration of this block decomposition can be found in Figure \ref{blocks}. 
The following result is a generalization of von Neumann's inequality from matrices to tensors. It is proved in \cite{ChretienWei:ArXiv15}. 
\begin{theo}\label{VN3}
Let $\Xc,\Yc\in \Rb^{n_1\times \cdots \times n_D}$ be tensors.  
Then for all $d=1,\ldots,D$, we have 
\bea\label{vonneumann1}
\langle \Xc, \Yc\rangle \leq \langle \sigma^{(d)}(\Xc), \sigma^{(d)}(\Yc) \rangle.
\eea
Equality in (\ref{vonneumann1}) holds simultaneously for all $d=1,\ldots,D$ if and only if there 
exist orthogonal matrices $W^{(d)}\in\Rb^{n_d\times n_d}$ for $d=1,\ldots, D$ 
and tensors $\Dc(\Xc),\Dc(\Yc)\in\Rb^{n_1\times\cdots\times n_D}$ such that
\bean
\Xc&=& \Dc(\Xc) \times_1 W^{(1)} \cdots \times_{D} W^{(D)}, \\
\Yc&=& \Dc(\Yc) \times_1 W^{(1)} \cdots \times_{D} W^{(D)},
\eean
where $\Dc(\Xc)$ and $\Dc(\Yc)$ satisfy the following properties:
 \begin{enumerate}
 \item[(i)] $\Dc(\Xc)$ and $\Dc(\Yc)$ are block-wise decomposable with the same number of blocks, which we will denote by $B$,
 \item[(ii)] the blocks $\{\Dc_b(\Xc)\}_{b=1,\ldots,B}$ (resp. $\{\Dc_b(\Yc)\}_{b=1,\ldots,B}$) on the diagonal of $\Dc(\Xc)$ (resp. $\Dc(\Yc)$) have the same sizes,
 \item[(iii)] for each $b=1,\ldots, B$ the two blocks $\Dc_b(\Xc)$ and $\Dc_b(\Yc)$ are proportional.
 \end{enumerate}
 \end{theo}

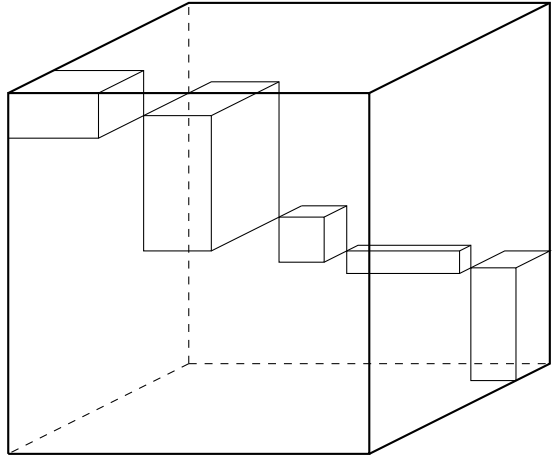
\begin{figure}
\begin{center}\begin{tikzpicture}[scale=0.60]

\draw[thick] (0,0) -- (0,8) -- (8,8)--(8,0)--(0,0);
\draw[thick] (8,0) -- (12,2) -- (12,10) -- (4,10) -- (0,8);
\draw[thick] (8,8) -- (12,10);
\draw[dashed] (4,2) -- (12,2);
\draw[dashed] (4,2) -- (4,10);
\draw[dashed] (4,2) -- (0,0);

\draw (0,7) -- (2,7) -- (2,8) -- (3, 8.5);
\draw (1, 8.5) -- (3, 8.5) -- (3, 7.5) -- (2, 7);


\draw (3, 7.5) -- (3, 4.5) -- (4.5,4.5) -- (4.5, 7.5) -- (3, 7.5);
\draw (3, 7.5) -- (4.5, 8.25) -- (6, 8.25) -- (6, 5.25) -- (4.5, 4.5);
\draw (4.5, 7.5) -- (6, 8.25);


\draw (6, 5.25) -- (6, 4.25) -- (7, 4.25) -- (7, 5.25) -- (6, 5.25);
\draw (7.5, 4.5) -- (7.5, 5.5) -- (6.5, 5.5);

\draw (6, 5.25) -- (6.5, 5.5);
\draw (7, 5.25) -- (7.5, 5.5);
\draw (7, 4.25) -- (7.5, 4.5);


\draw (7.5, 4.5) -- (7.5, 4) -- (10, 4) -- (10, 4.5) -- (7.5, 4.5);
\draw (7.5, 4.5) -- (7.75, 4.625) -- (10.25, 4.625)-- (10.25, 4.125)-- (10, 4);
\draw (10.25, 4.625) -- (10, 4.5);

\draw (10.25, 4.125) -- (10.25, 1.625) -- (11.25, 1.625) -- (11.25, 4.125) -- (10.25, 4.125);
\draw (10.25, 4.125) -- (11, 4.5) -- (12, 4.5) -- (11.25, 4.125);
\end{tikzpicture}
\end{center}
\caption{A block-wise diagonal tensor.}
\label{blocks}
\end{figure}

\subsection{Subdifferential for ODEC tensors}
\begin{theo} \label{conjugate2}
Let $f: \Rb^{n}\times\cdots\times\Rb^n \mapsto \Rb$ satisfy property 
\begin{align}
\label{symf}
f(s_1,\ldots,s_D) & = f(s_{\tau(1)},\ldots,s_{\tau(D)})
\end{align}
for all $\tau \in \mathfrak{S}_S$. 
 Then for all ODEC tensors $\Xc$, we have
\bea\label{114a}
(f\circ\sigma)^* (\Xc)= f^*(\sigma(\Xc))
\eea
\end{theo}
Using this result combined with von Neumann's inequality for tensors, one easily obtains the following corollary. 
\begin{coro}
Let $f: \Rb^{n}\times\cdots\times\Rb^n \mapsto \Rb$ satisfy property 
\begin{align}
\label{symf}
f(s_1,\ldots,s_D) & = f(s_{\tau(1)},\ldots,s_{\tau(D)})
\end{align}
for all $\tau \in \mathfrak{S}_S$. 
Let $\Xc$ be an ODEC tensor. Then necessary and sufficient conditions for an 
ODEC tensor $\Yc$ to belong to $\partial (f\circ\sigma)(\Xc)$ are
\begin{enumerate}
\item $\Yc$ has the same mode-$d$ singular spaces as $\Xc$ for all $d=1,\ldots,D$,
\item $\sigma(\Yc)\in\partial f(\sigma(\Xc))$.
\end{enumerate}
\end{coro}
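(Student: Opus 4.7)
The plan is to reduce the membership $\Yc \in \partial (f\circ \sigma)(\Xc)$ to the Fenchel equality
\[
(f\circ \sigma)(\Xc) + (f\circ \sigma)^*(\Yc) = \langle \Xc, \Yc\rangle,
\]
and then to exploit the preceding tools (Theorem~\ref{conjugate2} and Theorem~\ref{VN3}) to decouple this single scalar equality into two independent equalities: one concerning the spectra, and one concerning the singular subspaces.

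First I would invoke Theorem~\ref{conjugate2} (applicable because $\Yc$ is assumed ODEC) to rewrite the left-hand side as $f(\sigma(\Xc)) + f^*(\sigma(\Yc))$. Next I would bound the right-hand side from above by averaging the tensor von Neumann inequality of Theorem~\ref{VN3} over $d=1,\ldots,D$, which, using the normalization $\sigma(\Xc)=\tfrac{1}{\sqrt{D}}(\sigma^{(1)}(\Xc),\ldots,\sigma^{(D)}(\Xc))$, yields
\[
\langle \Xc, \Yc\rangle \;\le\; \langle \sigma(\Xc), \sigma(\Yc)\rangle.
\]
Combining this with the ordinary Fenchel inequality $f(\sigma(\Xc)) + f^*(\sigma(\Yc)) \ge \langle \sigma(\Xc),\sigma(\Yc)\rangle$ forces a chain of inequalities that must all be equalities. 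The upper Fenchel inequality becomes an equality exactly when $\sigma(\Yc)\in\partial f(\sigma(\Xc))$, giving condition~(2). The lower (von~Neumann) inequalities become equalities \emph{simultaneously} for all $d$ precisely under the rigidity conclusion of Theorem~\ref{VN3}, namely that $\Xc$ and $\Yc$ admit a common orthogonal system $W^{(1)},\ldots,W^{(D)}$ into block-wise decomposable cores with proportional blocks.

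The delicate point, which I expect to be the main obstacle, is translating this rigidity condition into the clean statement ``$\Yc$ has the same mode-$d$ singular spaces as $\Xc$ for every $d$''. In the ODEC setting the blockwise decomposable cores must themselves be diagonal (because their matricizations have orthogonal rank-one terms), so the common $W^{(d)}$ reassemble the singular vectors of both tensors mode by mode; conversely, sharing mode-$d$ singular spaces for every $d$ allows one to build such common $W^{(d)}$ and gives proportional diagonal cores on each invariant block. I would check carefully that multiplicities in the spectrum of $\Xc$ do not cause mismatches: singular vectors inside a degenerate eigenspace are not individually identified, but the span is, which is why the statement is phrased in terms of singular \emph{spaces} rather than singular \emph{vectors}.

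Assembling these observations, the forward direction falls out by reading the equalities above from left to right, and the reverse direction by checking that conditions (1) and (2) make both the Fenchel and the tensor von Neumann inequalities tight, hence restoring the Fenchel equality that certifies $\Yc \in \partial (f\circ\sigma)(\Xc)$.
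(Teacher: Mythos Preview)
Your proposal is correct and follows exactly the route the paper indicates: the paper's entire proof is the sentence ``Using this result combined with von Neumann's inequality for tensors, one easily obtains the following corollary,'' and your argument is precisely the standard Lewis-type fleshing-out of that hint via the Fenchel equality, Theorem~\ref{conjugate2} applied to the ODEC tensor $\Yc$, and the averaged tensor von~Neumann inequality from Theorem~\ref{VN3}. Your identification of the equality-case translation (block-wise proportional cores versus shared mode-$d$ singular spaces) as the only nontrivial step is accurate, and the paper does not supply any further detail on it.
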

\begin{coro}
Let
$\Xc =  \Dc(\alpha)\times_1 U^{(1)} \times_2 \cdots\times_D U^{(D)}$
be an ODEC tensor. Then
 the subdifferential $\partial\|\cdot\|_*(\Xc)$ includes the following set
\bean
\Omega=\left\{ \Dc(\boldsymbol{1})\times_1 U^{(1)} \times_2 \cdots\times_D U^{(D)}
+\Vc\,\, \Big|\,\, \|\Vc\|\leq 1, \,\,\Vc\times_i U^{(i)\TR}=0, \,\,i=1,\ldots,D\right\}.
\eean
\end{coro}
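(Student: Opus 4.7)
The plan is to verify the subgradient characterization directly: $\Yc\in\partial\|\cdot\|_*(\Xc)$ iff $\langle\Yc,\Xc\rangle=\|\Xc\|_*$ and $\langle\Yc,\Zc\rangle\le\|\Zc\|_*$ for every $\Zc$. I write a generic element of $\Omega$ as $\Yc=\Yc_0+\Vc$ with
\[
\Yc_0 := \Dc(\boldsymbol{1})\times_1 U^{(1)}\cdots\times_D U^{(D)} = \sum_{i=1}^{r} u_i^{(1)}\otimes\cdots\otimes u_i^{(D)},
\]
and check the two conditions in turn.

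For the equality at $\Xc$: Proposition~\ref{spectrum1} gives $\|\Xc\|_*=\sum_{i=1}^{r}\alpha_i$, and the mode-wise orthonormality of $\{u_i^{(d)}\}_{i\le r}$ yields $\langle\Yc_0,\Xc\rangle=\sum_i\alpha_i$. The hypothesis $\Vc\times_d U^{(d)\TR}=0$ implies $\Vc\times_d u_i^{(d)\TR}=0$ for every $i\le r$, so $\langle\Vc,u_i^{(1)}\otimes\cdots\otimes u_i^{(D)}\rangle=0$ for each $i$, giving $\langle\Vc,\Xc\rangle=0$. Adding contributions yields $\langle\Yc,\Xc\rangle=\|\Xc\|_*$.

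For the dual-norm inequality at arbitrary $\Zc$, I matricize. Since $\|\Zc\|_*=\frac{1}{D}\sum_d\|\Zc_{(d)}\|_{\mathrm{nuc}}$ and $\langle\Yc,\Zc\rangle=\langle\Yc_{(d)},\Zc_{(d)}\rangle$ for each $d$, matrix H\"older duality yields $\langle\Yc,\Zc\rangle\le\bigl(\max_d\|\Yc_{(d)}\|_{\mathrm{op}}\bigr)\|\Zc\|_*$, reducing the task to $\|\Yc_{(d)}\|_{\mathrm{op}}\le 1$ for every $d$. Matricizing $\Yc_0$ gives $(\Yc_0)_{(d)}=U^{(d)}V^{(d)\TR}$, where $V^{(d)}$ has $i$-th column $u_i^{(d+1)}\otimes\cdots\otimes u_i^{(d-1)}$ and both $U^{(d)}$ and $V^{(d)}$ have orthonormal columns. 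The constraint $\Vc\times_d U^{(d)\TR}=0$ reads $U^{(d)\TR}\Vc_{(d)}=0$, and the constraints $\Vc\times_e U^{(e)\TR}=0$ for $e\ne d$ force $\Vc_{(d)}V^{(d)}=0$ (each column of $V^{(d)}$ is killed by a single such contraction). Completing $U^{(d)}$ and $V^{(d)}$ to orthonormal bases, $\Yc_{(d)}$ becomes block-diagonal $\mathrm{diag}(I_r,M)$ with $M=U^{(d)\TR}_\perp\Vc_{(d)}V^{(d)}_\perp$, and hence $\|\Yc_{(d)}\|_{\mathrm{op}}=\max(1,\|\Vc_{(d)}\|_{\mathrm{op}})$.

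The delicate step, and the one I expect to require the most care, is controlling the matrix spectral norm $\|\Vc_{(d)}\|_{\mathrm{op}}$ by the tensor spectral norm $\|\Vc\|\le 1$. In general these two quantities differ, with $\|\Vc\|$ strictly dominated by the spectral norm of any matricization, so the reduction is not tautological. The resolution must exploit the joint orthogonality constraints forcing $\Vc$ into the Kronecker product of the orthogonal complements of the singular spaces in every mode; with this restricted support, one should show that $\|\Vc_{(d)}\|_{\mathrm{op}}$ is attained on a rank-one factor and therefore agrees with $\|\Vc\|$. Once this identification is in hand, $\|\Yc_{(d)}\|_{\mathrm{op}}\le 1$ follows and the subgradient inequality is complete.
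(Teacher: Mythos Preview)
Your argument is sound right up to the point you flag, but the ``delicate step'' is not merely delicate---it fails. The constraints $\Vc\times_i U^{(i)\top}=0$ only say that $\Vc=\Tc\times_1 U^{(1)}_\perp\cdots\times_D U^{(D)}_\perp$ for some \emph{completely arbitrary} tensor $\Tc$; they impose no structure on $\Tc$ itself. Since multiplication by orthogonal matrices in each mode preserves both the tensor spectral norm and every matricization spectral norm, one has $\|\Vc\|=\|\Tc\|$ and $\|\Vc_{(d)}\|_{\mathrm{op}}=\|\Tc_{(d)}\|_{\mathrm{op}}$, and the gap between these two quantities is exactly the same as for a generic tensor. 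The restricted support buys you nothing.

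Concretely, with $D=3$, $r=1$, $\Xc=e_1\otimes e_1\otimes e_1$, take $\Vc=\tfrac{3}{2}\Wc$ where $\Wc$ is the $W$-tensor on indices $\{2,3\}^3$ with $\Wc_{223}=\Wc_{232}=\Wc_{322}=\tfrac{1}{\sqrt{3}}$. Then $\|\Vc\|=1$ (Banach's theorem gives $\|\Wc\|=2/3$), so $\Yc=e_1^{\otimes 3}+\Vc\in\Omega$. But testing against $\Zc=\Vc$ gives $\langle\Yc,\Zc\rangle=\|\Vc\|_F^2=9/4$, while each matricization of $\Wc$ has singular values $\sqrt{2/3},\sqrt{1/3}$, so $\|\Vc\|_*=\tfrac{3}{2}\cdot\tfrac{\sqrt{2}+1}{\sqrt{3}}<9/4$. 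Thus $\Yc\notin\partial\|\cdot\|_*(\Xc)$, and the inclusion $\Omega\subset\partial\|\cdot\|_*(\Xc)$ is false as stated. The paper offers no proof of this corollary to compare against; your chain of reductions is in fact correct, and what it shows is that the right constraint on $\Vc$ is $\max_d\|\Vc_{(d)}\|_{\mathrm{op}}\le 1$ rather than the tensor spectral norm bound $\|\Vc\|\le 1$. With that amended hypothesis your proof goes through verbatim.
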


\section{Application to tensor recovery with gaussian measurements}
Let $\Xc^\#\in \Rb^{n_1\times n_2\times n_3}$ be an unknown true signal,  
$\Phi(\cdot):\Rb^{n_1\times n_2\times n_3}\mapsto \Rb^m$ be a known linear measurement mapping and
\bea
y=\Phi(\Xc^\#)+\xi
\eea
 be a noised vector of measurements in $\Rb^m$.

We focus on the following optimization problem:
\bea\label{928a}
\min_{\Xc} \|\Xc\|_*\quad\textrm{subject to } \|\Phi(\Xc) -y\|\leq \eta.
\eea
 Let $\hat{\Xc}$ be any solution of optimization problem (\ref{928a}). We are interested in giving a bound for
\bean
\|\hat{\Xc} - \Xc^\#\|_F.
\eean
The main tool of this section is the following result by Tropp \cite{Tropp:ArXiv14}:
\begin{theo}\label{TroppTheo1}
Assume that $\|\xi\|\leq \eta$. Then with probability at least $1- e^{-t^2/2}$, we have
\bean
\|\hat{\Xc} - \Xc^\#\|_F \leq \frac{2\eta}{[\sqrt{m-1} - w(\Ds(\|\cdot\|_*,\Xc^\#)) -t  ]_+},
\eean
where $[a]_+=\max\{a,0\}$ for any $a\in\Rb$.
\end{theo}
The quantity $w(\Ds(\|\cdot\|_*,\Xc^\#))$
denotes the conic Gaussian width $w(\cdot)$ of the descent cone $\Ds(\|\cdot\|_*,\Xc^\#)$.
The definitions of these notions are given as follows:
\begin{defi}
Let $K\in \Rb^d$ be a cone,  the conic Gaussian width $w(K)$ is defined as
\bean
w(K) = \Eb[\sup_{u\in K\cap \Sc^{d-1}} \langle g, u \rangle ],
\eean
where $g\sim \Nc(0,I)$ is a standard Gaussian vector and $\Sc^{d-1}$ denotes the unit sphere in $\Rb^d$.
\end{defi}
\begin{defi}
Let $f:\Rb^d\mapsto\bar{\Rb}$ be a proper convex function. The descent cone $\Ds(f,x)$ of the function $f$ at a point $x\in\Rb^d$ is defined as
\bean
\Ds(f,x)\defeq  \{\lambda u\,|\, \lambda>0, u\in\Rb^d,\, f(x+u)\leq f(x) \}.
\eean
\end{defi}

According to Theorem \ref{TroppTheo1}, the error bound of $\|\hat{\Xc} - \Xc^\#\|_F$ depends on the
conic Gaussian width $w(\cdot)$ of the descent cone $\Ds(\|\cdot\|_*,\Xc^\#)$.
The following result reveals that the latter is then closely related to the subdifferential of $\|\cdot\|_*$ at
$\Xc^\#$.
\begin{prop}
Assume that $\partial\|\Xc^\#\|$ is nonempty and does not contain the origin. Then
\bean
w^2(\Ds(\|\cdot\|_*,\Xc^\#)) \leq \Eb \inf_{\tau\geq 0} \mathrm{dist}^2_F(\Gc, \tau \partial \|\Xc^\#\|_*),
\eean
where $\Gc\in\Rb^{n_1\times n_2 \times n_3}$ is a tensor with i.i.d. random Gaussian entries and
\bean
\mathrm{dist}_F(\Gc, \tau \partial \|\Xc^\#\|_*)\defeq \inf_{\Yc\in\tau \partial \|\Xc^\#\|_* } \|\Gc - \Yc\|_F,
\eean
i.e. the distance between $\Gc$ and the set $\tau \partial \|\Xc^\#\|_*$.
\end{prop}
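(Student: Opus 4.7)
The plan is to follow the standard approach used by Chandrasekaran--Recht--Parrilo--Willsky and recycled by Tropp, transposed verbatim to the tensor setting since the Frobenius inner product turns $\Rb^{n_1\times n_2\times n_3}$ into an ordinary Euclidean space.

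First, use Jensen's inequality on the definition of the conic Gaussian width. Since $(\Eb X)^2 \le \Eb X^2$, one immediately obtains
\bean
w^2(\Ds(\|\cdot\|_*,\Xc^\#)) \leq \Eb\Big[\sup_{\Uc \in \Ds(\|\cdot\|_*,\Xc^\#)\cap \Sc} \langle \Gc, \Uc\rangle \Big]^2,
\eean
where $\Sc$ denotes the Frobenius unit sphere. It therefore suffices to bound the inner quantity, pointwise in $\Gc$, by $\inf_{\tau\ge 0} \mathrm{dist}_F^2(\Gc, \tau \partial\|\Xc^\#\|_*)$.

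The second step is to identify the polar of the descent cone. Standard convex analysis (valid in any Euclidean space and thus applicable here) gives, under the hypothesis that $\partial \|\Xc^\#\|_*$ is nonempty and does not contain the origin,
\bean
\Ds(\|\cdot\|_*, \Xc^\#)^\circ = \overline{\mathrm{cone}\big(\partial \|\Xc^\#\|_*\big)} = \overline{\bigcup_{\tau \ge 0} \tau \, \partial \|\Xc^\#\|_*}.
\eean
This is exactly the fact that the normal cone to a convex function at a point is the conic hull of its subdifferential, combined with the polarity between descent cone and normal cone. I would verify that the closure operation does not affect the value of the distance used below.

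The third and key step is the general polar-cone inequality: for any closed convex cone $K$ in a Euclidean space and any point $g$,
\bean
\sup_{u\in K \cap \Sc} \langle g, u\rangle \;\leq\; \mathrm{dist}(g, K^\circ).
\eean
Indeed, decomposing $g$ as its Moreau projection $g = \Pi_K(g) + \Pi_{K^\circ}(g)$ with orthogonal summands, one has $\langle g, u\rangle = \langle \Pi_K(g), u\rangle \le \|\Pi_K(g)\| = \mathrm{dist}(g, K^\circ)$ for every $u \in K \cap \Sc$. Applying this with $K=\Ds(\|\cdot\|_*,\Xc^\#)$, $g=\Gc$, and the polar identity from the previous paragraph yields
\bean
\sup_{\Uc \in \Ds(\|\cdot\|_*,\Xc^\#)\cap \Sc} \langle \Gc, \Uc\rangle \;\le\; \mathrm{dist}_F\Big(\Gc, \bigcup_{\tau\ge 0} \tau\, \partial\|\Xc^\#\|_*\Big) \;=\; \inf_{\tau \ge 0} \mathrm{dist}_F(\Gc, \tau\, \partial\|\Xc^\#\|_*).
\eean
Squaring and taking expectation, and combining with the Jensen step, gives the claim.

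The main obstacle is the polar-cone identification in the second step: one has to check carefully that the hypothesis ``$\partial\|\Xc^\#\|_*$ nonempty and not containing the origin'' is exactly what is needed to ensure the descent cone is convex (because $\|\cdot\|_*$ is convex) and that its polar coincides with the \emph{closed} conic hull of the subdifferential, and to justify replacing a distance to a closure by a distance to the set itself. Everything else is routine Euclidean convex analysis and holds in the tensor space once it is viewed as $\Rb^{n_1 n_2 n_3}$ endowed with the Frobenius inner product.
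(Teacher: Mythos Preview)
The paper does not actually supply a proof of this proposition: it is stated without argument and is implicitly borrowed from Tropp's framework \cite{Tropp:ArXiv14} (and before that Chandrasekaran--Recht--Parrilo--Willsky), which the paper cites as the ``main tool of this section.'' Your write-up is precisely that standard argument, transported to $\Rb^{n_1\times n_2\times n_3}$ with the Frobenius inner product, and it is correct.

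One very minor imprecision: in your Moreau step you write $\langle g,u\rangle=\langle \Pi_K(g),u\rangle$, but for a general $u\in K$ one only has $\langle \Pi_{K^\circ}(g),u\rangle\le 0$, hence $\langle g,u\rangle\le\langle \Pi_K(g),u\rangle$; the inequality you need still follows. Your remark about the closure is fine since $\mathrm{dist}_F$ to a set and to its closure coincide.
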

To derive a bound for $\|\hat{\Xc} - \Xc^\#\|_F$, we need to give an upper bound for
\bean
 \Eb \inf_{\tau\geq 0}\mathrm{dist}_F^2(\Gc, \tau \partial \|\Xc^\#\|_*).
\eean
The following result establishes such a bound in the case that $\Xc^\#$ is odec.
\begin{prop}
If $\Xc^\#$ is odec, then we have the following bound:
\bean
\Eb \inf_{\tau\geq 0}\mathrm{dist}_F^2(\Gc, \tau \partial \|\Xc^\#\|_*)&\leq& r^3 + r+ 3r(n_1+n_2+n_3 - 3r)  + r(n_1n_2+n_2n_3 + n_1n_3) \\
&&- r^2(n_1+n_2+n_3).
\eean
\end{prop}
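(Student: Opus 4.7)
The plan is to use the subset $\Omega$ of $\partial\|\Xc^\#\|_*$ furnished by the preceding corollary as a proxy for the full subdifferential, giving $\inf_{\tau\geq 0}\mathrm{dist}^2_F(\Gc,\tau\partial\|\Xc^\#\|_*)\leq \inf_{\tau\geq 0}\mathrm{dist}^2_F(\Gc,\tau\Omega)$. I would first complete each $U^{(d)}=[u_1^{(d)},\ldots,u_r^{(d)}]$ into a full orthogonal matrix $\tilde U^{(d)}$. By orthogonal invariance of the i.i.d.\ Gaussian law, the rotated tensor $\tilde\Gc = \Gc\times_1 \tilde U^{(1)\TR} \times_2 \tilde U^{(2)\TR} \times_3 \tilde U^{(3)\TR}$ has the same distribution as $\Gc$, while the anchor $\Dc(\mathbf 1)\times_1 U^{(1)}\times_2 U^{(2)}\times_3 U^{(3)}$ pulls back to $\Dc(\mathbf 1)$, supported on the diagonal of the $r\times r\times r$ corner. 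The constraint $\Vc\times_i U^{(i)\TR}=0$ is preserved by the rotation and forces $\Vc$ to live in the opposite $(n_1-r)\times(n_2-r)\times(n_3-r)$ corner.

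Split the ambient tensor space into the eight orthogonal sub-spaces $T_{e_1 e_2 e_3}$ indexed by $(e_1,e_2,e_3)\in\{0,1\}^3$, with $e_i=0$ (resp.\ $e_i=1$) corresponding to the index range $\{1,\ldots,r\}$ (resp.\ $\{r+1,\ldots,n_i\}$) along mode $i$, and write $\tilde\Gc=\sum_e \tilde\Gc_e$. Then for any $\Vc$ with $\|\Vc\|\leq 1$ and $\Vc\in T_{111}$,
\bean
\|\tilde\Gc - \tau\Dc(\mathbf 1) - \tau\Vc\|_F^2 & = & \|\tilde\Gc_{000}-\tau\Dc(\mathbf 1)\|_F^2 + \|\tilde\Gc_{111}-\tau\Vc\|_F^2 \\
& & + \sum_{e\neq(000),(111)}\|\tilde\Gc_e\|_F^2.
\eean
The key move is to choose the random scale $\tau=\|\tilde\Gc_{111}\|$ together with $\Vc=\tilde\Gc_{111}/\|\tilde\Gc_{111}\|$, annihilating the middle term. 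The six ``cross'' blocks contribute their dimensions in expectation, summing to $r^2(n_1+n_2+n_3-3r)+r\sum_{i<j}(n_i-r)(n_j-r)$. Since $\tilde\Gc_{000}$ is independent of $\tau$ and $\Eb\tilde\Gc_{iii}=0$, the corner block contributes $r^3 + r\,\Eb\|\tilde\Gc_{111}\|^2$.

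The remaining task is to bound $\Eb\|\tilde\Gc_{111}\|^2$. I would apply Sudakov--Fernique to the centered Gaussian processes $X_{u,v,w}=\langle \tilde\Gc_{111}, u\otimes v\otimes w\rangle$ and $Y_{u,v,w}=\langle g_1, u\rangle+\langle g_2,v\rangle+\langle g_3,w\rangle$ (with $g_i$ independent standard Gaussians on $\Rb^{n_i-r}$) indexed by the product of unit spheres. The elementary inequality $1-abc\leq (1-a)+(1-b)+(1-c)$ valid for $a,b,c\in[-1,1]$ supplies the required increment comparison and yields $\Eb\|\tilde\Gc_{111}\|\leq \sqrt{n_1-r}+\sqrt{n_2-r}+\sqrt{n_3-r}$. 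Gaussian Lipschitz concentration (the tensor operator norm is $1$-Lipschitz in $\|\cdot\|_F$) gives $\mathrm{Var}\|\tilde\Gc_{111}\|\leq 1$, and Cauchy--Schwarz then produces $\Eb\|\tilde\Gc_{111}\|^2\leq 1+3(n_1+n_2+n_3-3r)$. Substituting back and expanding the eight-block sum reproduces the announced right-hand side exactly. The main obstacle is this last inequality with the correct constants; once Sudakov--Fernique and the scalar identity $a+b+c-abc\leq 2$ are in place, what remains is bookkeeping.
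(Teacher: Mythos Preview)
Your proof is correct and follows essentially the same route as the paper: replace $\partial\|\cdot\|_*$ by the subset $\Omega$, rotate by $\tilde U^{(d)}$ using Gaussian orthogonal invariance, split into the eight blocks, kill the large corner by choosing $\tau=\|\tilde\Gc_{111}\|$ and $\Vc=\tilde\Gc_{111}/\|\tilde\Gc_{111}\|$, and then bound $\Eb\|\tilde\Gc_{111}\|^2\le 1+(\sqrt{n_1-r}+\sqrt{n_2-r}+\sqrt{n_3-r})^2$. The only difference is that you explicitly justify this last inequality via Sudakov--Fernique (using $a+b+c-abc\le 2$ on $[-1,1]^3$) and Gaussian Lipschitz concentration, whereas the paper simply invokes the bound without proof.
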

\begin{proof}
If $\Xc^\#$ is orthogonally decomposable, i.e.
\bean
\Xc^\# 
&=& \sum_{i=1}^r \sigma_i u_i^{(1)}\otimes u_i^{(2)}\otimes u_i^{(3)}  \\
&=& \Dc(\sigma)\times_1 U^{(1)}\times_2 U^{(2)} \times_3 U^{(3)},
\eean
where $\Dc(\sigma)$ is a diagonal tensor with diagonal elements $\sigma=(\sigma_1,\ldots,\sigma_r)$ and 
$U^{(j)}=(u^{(j)}_1,\ldots,u^{(j)}_r)$ for $j=1,2,3$, then the subdifferential $\partial\|\cdot\|_*(\Xc^\#)$ includes the following set
\bea\label{subdiff}
\Omega=\left\{ \sum_{i=1}^r u_i^{(1)}\otimes u_i^{(2)}\otimes u_i^{(3)} 
+\Vc\,\, \Big|\,\, \|\Vc\|\leq 1, \,\,\Vc\times_i U^{(i)}=0, \,\,i=1,2,3.\right\}.
\eea
Hence
\bean
 \Eb \inf_{\tau\geq 0}\mathrm{dist}_F^2(\Gc, \tau \partial \|\Xc^\#\|_*)
\leq \Eb \inf_{\tau\geq 0}\mathrm{dist}_F^2(\Gc, \tau \Omega)
=\Eb \inf_{\tau\geq 0}\inf_{\Yc\in\Omega}\|\Gc -  \tau \Yc\|_F^2.
\eean
Note that $\Vc$ in (\ref{subdiff}) can also be characterized by
\bea\label{tensorv}
\Vc=\Tc \times_1 {U}^{(1)}_\perp\times_2 {U}^{(2)}_\perp \times_3 {U}^{(3)}_\perp,
\eea
where $\Tc\in\Rb^{(n_1-r)\times(n_2-r)\times (n_3-r)}$ is a tensor such that $\|\Tc\|\leq 1$ and 
${U}^{(i)}_\perp \in\Rb^{n_i\times (n_i-r)}$ is a matrix such that $\tilde{U}^{(i)}=(U^{(i)} | U^{(i)}_\perp)$ is orthogonal for $i=1,2,3$.
In view of (\ref{subdiff}) and (\ref{tensorv}), we assert that any $\Yc\in\Omega$ can be written as
\bean
\Yc=\Cc \times_1 \tilde{U}^{(1)}\times_2 \tilde{U}^{(2)} \times_3 \tilde{U}^{(3)}.
\eean
where tensor $\Cc$ is block-wise diagonal with two diagonal blocks $\Cc_{1}=\diag(\boldsymbol{1})\in\Rb^{r\times r\times r}$ and
$\Cc_2=\Tc\in\Rb^{(n_1-r)\times(n_2-r)\times (n_3-r)}$.

Because $\Gc\in\Rb^{n_1\times n_2 \times n_3}$ is a tensor with i.i.d.  random standard Gaussian entries,
for any orthogonal matrices $W^{(1)}, W^{(2)},W^{(3)}$ with appropriate size,
  tensor $\Gc \times_1 W^{(1)}\times_2 W^{(2)} \times_3 W^{(3)}$  still has i.i.d. standard Gaussian entries. Therefore, we may choose a coordinate system such that
\bean
\Eb \inf_{\tau\geq 0}\inf_{\Yc\in\Omega}\|\Gc -  \tau \Yc\|_F^2
=\Eb \inf_{\tau\geq 0}\inf_{\Cc\in\tilde{\Omega}}\|\Gc -  \tau \Cc\|_F^2,
\eean
where $\tilde{\Omega}$ denotes the set of block-wise diagonal tensors with two diagonal blocks $\Cc_{111}=\Dc(\boldsymbol{1})\in\Rb^{r\times r\times r}$ and
$\Cc_{222}\in\Rb^{(n_1-r)\times(n_2-r)\times (n_3-r)}$ verifying $\|\Cc_2\|\leq 1$.
Partitioning $\Gc$ in the same manner, we obtain 
\bean
\|\Gc -  \tau \Cc\|_F^2 = \|\Gc_{111} - \tau \Dc(\boldsymbol{1})\|_F^2 + \|\Gc_{222} - \tau\Tc\|^2_F
+\sum_{i,j,k=1  \atop i,j,k \textrm{ are not equal}}^2 \|\Gc_{i,j,k}\|_F^2.
\eean
Since $\Gc$ is a tensor with independent Gaussian entries, it follows that
\bean
\Eb \sum_{i,j,k=1  \atop i,j,k \textrm{ are not equal}}^2 \|\Gc_{ijk}\|_F^2
=r(n_1n_2+n_2n_3 + n_1n_3) - r^2(n_1+n_2+n_3).
\eean
Thus
\bean
\Eb \inf_{\tau\geq 0}\inf_{\Cc\in\tilde{\Omega}}\|\Gc -  \tau \Cc\|_F^2
&=& \Eb \inf_{\tau\geq 0}\inf_{\|\Cc_2\|\leq 1}\Big(\| \Gc_{111} - \diag(\tau) \|_F^2+\|\Gc_{222} -  \tau \Cc_2\|_F^2\Big)\\
&& + r(n_1n_2+n_2n_3 + n_1n_3) - r^2(n_1+n_2+n_3). \\
\eean
Choosing $\tau=\|\Gc_2\|$, we get
\bean
\Eb \inf_{\tau\geq 0}\inf_{\|\Cc_2\|\leq 1}\Big(\| \Gc_1 - \diag(\tau) \|_F^2+\|\Gc_2 -  \tau \Cc_2\|_F^2\Big)
\leq \Eb \| \Gc_1 - \diag(\|\Gc_2\|) \|_F^2 
\eean
Since
\bean
\Eb \| \Gc_1 - \diag(\|\Gc_2\|) \|_F^2 &=& r^3 + r\Eb \|\Gc_2\|^2 \leq r^3 + r + r\Big( \sqrt{n_1-r} + \sqrt{n_2-r} + \sqrt{n_3 -r}  \Big)^2 \\
&\leq &r^3 + r+ 3r(n_1+n_2+n_3 - 3r),
\eean
It follows that
\bean
\Eb \inf_{\tau\geq 0}\mathrm{dist}_F^2(\Gc, \tau \partial \|\Xc^\#\|_*)&\leq& r^3 + r+ 3r(n_1+n_2+n_3 - 3r)  + r(n_1n_2+n_2n_3 + n_1n_3) \\
&&- r^2(n_1+n_2+n_3).
\eean
\end{proof}
If the tensor is cubic, i.e. $n_i=n$ for $i=1,2,3$, then  we have with at least probability $1-e^{-t^2/2}$ that
\bean
\|\hat{\Xc} - \Xc^\#\|_F \leq \frac{2\eta}{[\sqrt{m-1} - (r^3 + r+ 9r(n - r)  + 3r n(n-r)) -t  ]_+}.
\eean

\bibliographystyle{amsplain}
\bibliography{Tensor}


\end{document}